\let\csname equation*\endcsname\relax
\let\csname endequation*\endcsname\relax
\journal{Nonlinear Dynamics}
\begin{document}
\newtheorem{definition}{Definition}[section]
\newtheorem{lemma}{Lemma}[section]
\newtheorem{remark}{Remark}[section]
\newtheorem{theorem}{Theorem}[section]
\newtheorem{proposition}{Proposition}
\newtheorem{assumption}{Assumption}
\newtheorem{example}{Example}
\newtheorem{corollary}{Corollary}[section]
\def\ep{\varepsilon}
\def\Rn{\mathbb{R}^{n}}
\def\Rm{\mathbb{R}^{m}}
\def\E{\mathbb{E}}
\def\hte{\hat\theta}
\renewcommand{\theequation}{\thesection.\arabic{equation}}
\begin{frontmatter}



\title{Controlling mean exit time of stochastic dynamical systems based on quasipotential and machine learning}

\author[addr1]{Yang Li}\ead{liyangbx5433@163.com}
\author[addr2]{Shenglan Yuan\corref{cor1}}\ead{shenglan.yuan@math.uni-augsburg.de}\cortext[cor1]{Corresponding author}
\author[addr1]{Shengyuan Xu}\ead{syxu@njust.edu.cn}

\address[addr1]{School of Automation, Nanjing University of Science and Technology, Nanjing 210094, China}
\address[addr2]{Institut f\"{u}r Mathematik, Universit\"{a}t Augsburg,
86135, Augsburg, Germany}

\begin{abstract}
The mean exit time escaping basin of attraction in the presence of white noise is of practical importance in various scientific fields.  In this work, we propose a strategy to control mean exit time of general stochastic dynamical systems to achieve a desired value based on the quasipotential concept and machine learning. Specifically, we develop a neural network architecture to compute the global quasipotential function. Then we design a systematic iterated numerical algorithm to calculate the controller for a given mean exit time. Moreover, we identify the most probable path between metastable attractors with help of the effective Hamilton-Jacobi scheme and the trained neural network. Numercal experiments demonstrate that our control strategy is effective and sufficiently accurate.
\end{abstract}

\begin{keyword}
Stochastic control, Machine learning, Quasipotential, Mean exit time, Dynamical systems.


\emph{2020 Mathematics Subject Classification}: 37N35, 37M05.

\end{keyword}

\end{frontmatter}


\section{Introduction}
Throughout history, scientists have attempted to model practical systems using mathematical equations. This has been quite successful in some scientific fields, but not in all. Stochastic dynamical systems have become an important modeling tool in numerous areas of science and engineering.  For example, ecosystems \cite{QY,YLZ}, Hamiltonian mechanics \cite{YB}, fluid dynamics \cite{YBD}, biological neurons \cite{YZD},  physical applications \cite{ZYHD} and many others \cite{CWY,TTYBD,YHLD} are often modeled using stochastic dynamical systems. Mathematical modeling of dynamical systems under
uncertainty often leads to stochastic differential equations (SDEs); see Protter \cite{P}.

When the noise intensity in multistable equilibrium system is non-vanishing, the stochastic effects induce a stochastic dynamics which destabilizes those attractors and leads
to random transitions between its coexisting basins of attraction \cite{HTB}. This situation is analogous to an escape dynamics from metastable wells \cite{FM}. There are many possible escape/switching paths, but there is a path along which switching or escape is most likely to occur. Characterization of the most probable transition path of stochastic dynamical system enables the determination of the mean time to escape from a metastable state to another metastable state \cite{HCYD}. The effect of external noise is often described using a Langevin equation (a master equation) or the associated Fokker-Planck equation. We can formulate a variational problem, either analytically or numerically, to find the optimal path of escape or switching that ultimately reduces to considering trajectories of an auxiliary Hamiltonian dynamical system \cite{LDLZ}.

The mean exit time can measure the escape over a potential barrier of the vector field, i.e., drift. The trajectories from the left hand well to the right well, take most of the time actually surmounting the barrier. We want to evaluate the mean exit time in going over the barrier for
analysis of dynamical systems perturbed by small noise. It is quite meaningful to talk of the escape time at that time for the trajectory, initially at stable point, to reach a point near saddle, since this time is quite insensitive to the exact location of the initial and final points.

For Freidlin's and Wentzell's large deviation theory for the nongradient SDEs with small white noise, the quasipotential is a key concept \cite{FW}.
In a recent development, There are novel and exciting applications in cardiology, population dynamics, communications, engineering,
laser technology, space research, and genetic switches. Aurell and Sneppen \cite{AS} examined the problem of escape from a stable equilibrium and its bifurcation with changing parameter in more than one dimension, and demonstrated how this determines the stability and the robustness of states of genetic networks. Using a nontrivial adjustment of the Ordered Upwind Method, Cameron computed the quasipotential on a mesh to find the most probable paths by numerical integration \cite{C2012}. Chen, Zhu and Liu \cite{CZL} considered the noise-induced escapes in an excitable system possessing a quasi-threshold manifold, along which there exists a certain point of minimal quasipotential. Dahiya and Cameron \cite{DC} performed numerical computation of the quasipotential for SDEs with multiplicative noise on a mesh, and discussed an application to the Maier-Stein model with anisotropic diffusion. Lv et al. \cite{LLLL} characterized the metastability of gene regulatory system perturbed by intrinsic noise, and constructed the global quasipotential energy landscape to calculate the optimal transition paths between the on and off states based on the large deviation theory. Nolting and Abbott \cite{NA} visualized stable states in stochastic systems using a ball-in-cup diagram, and  provided the quasipotential as a practical tool to quantify stability in stochastic systems.
Yang, Potter and Cameron \cite{YPC} analyzed the ordered line integral method with midpoint rule for finding the quasipotential in 3D by use of Karush-Kuhn-Tucker theory for rejecting unnecessary simplex updates, and conducted an upgraded hierarchical update strategy to prune the number of admissible simplexes and a fast search for them.

A neural network is a powerful parallel information-processing system and universal approximation map \cite{LD}. It
is composed of neurons and searches for the minimum of the loss function during the learning process by gradient descent method.
Neural networks are being used extensively in the fields of aeronautics,
robotics, defense, engineering, telecommunications, manufacturing, medicine,
insurance, psychology, banking, security, marketing, and finance \cite{S1996}. These artificial networks may be used for predictive modeling, adaptive control and applications where they can be trained via a dataset \cite{LDL}. Self-learning resulting from experience can occur within networks, which can derive conclusions from a complex and seemingly unrelated set of information \cite{T}. It is worth pointing out that machine learning provides
a means of tackling nondeterministic systems, where the equations used to
model the system are not known \cite{LXDLC}.

Our goal is to control the mean exit time of stochastic dynamical systems to achieve a desired value based on the
quasipotential and machine learning. It visualizes the dynamics, and allows us to find most probable transition
paths and estimate stationary probability density.

This paper is organized as follows. In Section \ref{Q}, we make a portrait of the quasipotential as the potential component of orthogonal decomposition for the nongradient SDEs with a finite number of isolated attractors by decomposing a smooth nongradient vector field  to a potential component and a rotational component in each basin of attraction. In Section \ref{C}, we give a description of the control strategy and devise a new technique to control the mean exit time of stochastic dynamical systems by utilizing the quasipotential and neural network. In Section \ref{N}, we perform numerical experiments to demonstrate the effectiveness of our control strategy by computing the quasipotential and minimum action paths in stochastic Maier-Stein model having two distinct stable points SN1 and SN2 separated by the potential barrier at US. In Section \ref{CF}, We draw the conclusions from the discussions and summarize the future challenges.

\section{Quasipotential}\label{Q}
We consider the two-dimensional physical model described by stochastic differential equation (SDE)
\begin{equation}\label{SM}
\dot{x}(t)=F(x(t))+\eta(t),
\end{equation}
where $F: \mathbb{R}^{2}\rightarrow\mathbb{R}^{2}$ represents a continuously differentiable vector field, and $\eta(t)$ can often be regarded as an additive randomly distributed noise, perturbing the state $x(t)$. In a formal sense, $\eta_{i}(t)$, $i=1,2$ can be equated with the time derivative $\partial_tB_{i}$ of a Brownian motion $B_{i}(t)$, resulting in paths $x_{i}(t)$ that are continuous everywhere but differentiable nowhere.

The stochastic Euler scheme is
\begin{equation*}
\Delta x=F(x(t))\Delta t+\Delta\eta,
\end{equation*}
where $\Delta x=x(t+\Delta t)-x(t)$. It is thoughtful of us to investigate
the statistics of finite increments $\Delta\eta_{i}=B_{i}(t+\Delta t)-B_{i}(t)$, $i=1,2$. Those real-valued random variables are normally distributed, whose Gaussian distribution has finite mean and variance determined by
\begin{equation*}
\mathbb{E}[\Delta\eta_{i}]=0\quad\quad\text{and}\quad\quad\mathbb{E}[(\Delta\eta_{i})^{2}]=\sigma\Delta t,
\end{equation*}
where $\sigma$ is the noise intensity. Now we use the Euler--Maruyama
method to generate a random walk approximating a diffusion
process expressed in the SDE \eqref{SM}. When $\Delta t\rightarrow0$, we have ($i,j=1,2$)
\begin{equation*}
\mathbb{E}[\eta_{i}(t)]=0\quad\quad\text{and}\quad\quad\mathbb{E}[\eta_{i}(t)\eta_{j}(s)]=\sigma\delta_{ij}\delta(t-s),
\end{equation*}
where $\delta$ is the Dirac delta function, and $\delta_{ij}$ denotes the Kronecker delta.

The probability density for the Gaussian noise $\eta(t)$ can be expressed as follows:
\begin{equation*}
\mathbb{P}[\eta(t)]\varpropto\text{exp}\Big(-\frac{1}{2\sigma}\int_{0}^{T} \left| \eta \right| ^{2}dt\Big),
\end{equation*}
where $\varpropto$ denotes the logarithmic equivalence relation. Since $\eta=\dot{x}-F(x)$, we get
\begin{equation*}
\mathbb{P}[x(t)]\varpropto\text{exp}\Big(-\frac{1}{2\sigma}\int_{0}^{T} \left| \dot{x}-F(x) \right| ^{2}dt\Big).
\end{equation*}
The Freidlin-Wentzell action functional \cite{FW} of $x(t), t\in[0,T],$ is the line integral
\begin{equation}\label{AF}
\mathcal{S}_{T}[x(t)]=\int_{0}^{T}L(x,\dot{x})dt,
\end{equation}
where the Lagrangian is given by $L(x,\dot{x})=\frac{1}{2}\left|\dot{x}-F(x)\right|^{2}$.
Minimization w.r.t. paths and time can be done analytically leading to the minimum action \cite{HV}.
Performing this minimization over paths from points $x_0$ to $\bar{x}$
yields the quasipotential
\begin{equation}\label{QP}
V(\bar{x}):=\inf_{T>0}\inf_{x\in C[0,T]}\Big\{\,\mathcal{S}_{T}[x]: x(0)=x_0, x(T)=\bar{x}\,\Big\},\quad V(x_0)=0,
\end{equation}
which characterizes the difficulty of the random state fluctuating to the point $\bar{x}$.

\begin{remark}\label{remark1}
The quasipotential function $V(x)$ gives asymptotic estimates for the stationary probability density within the basin of attractors of
$\dot{x}=F(x)$ in the limit $\sigma\rightarrow0$. Moreover, the mean exit time $T_E$ exponentially depends on the minimal value $V_0$ of the quasipotential on the boundary of basin domain $D$ (generally the transition state $\bar{x}$ located at the saddle point), i.e.,
\begin{equation}\label{MET}
	T_{E}=be^{V_0/\sigma}, \ \ \  V_0 := \inf_{x \in \partial D} V(x),
\end{equation}
where $T_{E} := \mathbb{E} \tau_{\text{\rm exit}}$ and the exit time $\tau_{\text{\rm exit}} :=\inf \{ t >0 \ | x(0)=x_0, x(t)\notin D \}$.
\end{remark}

According to
\begin{equation*}
	\mathbb{P}[x(t)]\varpropto \text{exp} \Big(-\frac{\mathcal{S}_{T}[x(t)]}{\sigma}\Big),
\end{equation*}
finding the most probable path is transformed into the minimization problem of the action functional. For $\sigma$ sufficiently small, the optimal path $x_{\text{opt}}$ (solution of this minimization problem) can be obtained from \eqref{AF} using the
calculus of variations. The optimal path $x_{\text{opt}}$ maximizes $\mathbb{P}[x(t)]$ by minimizing $\mathcal{S}_{T}[x(t)]$. Upon demanding that the first variation $\delta \mathcal{S}_{T}[x]=0$, we get
\begin{align*}
\delta \mathcal{S}_{T}[x]&=\mathcal{S}_{T}[x+\delta x]-\mathcal{S}_{T}[x]=\int_{0}^{T}\mathcal{L}(x+\delta x,\dot{x}+\delta\dot{x})dt-\int_{0}^{T}\mathcal{L}(x,\dot{x})dt\\
                 &=\int_{0}^{T}\Big(\frac{\partial \mathcal{L}}{\partial x}\cdot\delta x+\frac{\partial \mathcal{L}}{\partial \dot{x}}\cdot\delta\dot{x}\Big)dt=\int_{0}^{T}\Big[\frac{\partial \mathcal{L}}{\partial x}-\frac{d}{dt}\Big(\frac{\partial \mathcal{L}}{\partial \dot{x}}\Big)\Big]\cdot\delta x dt,
\end{align*}
where we expanded $\mathcal{L}$ in a Taylor series, and integrated
by parts to isolate $\delta x$. Since the variation $\delta x $ is arbitrary,  $\delta \mathcal{S}_{T}[x]$ vanishes if and only if
\begin{equation*}
\frac{\partial \mathcal{L}}{\partial x}-\frac{d}{dt}\Big(\frac{\partial \mathcal{L}}{\partial \dot{x}}\Big)=\ddot{x}-\big(\nabla F(x)\big)^{\top}F(x)=0.
\end{equation*}
Here and in the following, $\nabla F(x)$ represents the Jacobian matrix with components
$(\nabla F(x))_{ij}=\partial F_i(x)/\partial x_j$, $i,j=1,2$.
So that $x_{\text{opt}}$ is a solution of second-order Euler-Lagrange differential equation:
\begin{equation*}
\ddot{x}=\big(\nabla F(x)\big)^{\top}F(x),\quad x(0)=x_0,\quad x(T)=\bar{x}.
\end{equation*}
Note that this is a boundary value problem and not a traditional initial value
problem. The dimensions of the Euler--Lagrange equation are twice the dimensions of the original SDE system \eqref{SM}.

The conjugate momentum $p$ is defined as
\begin{equation*}
p:=\frac{\partial \mathcal{L}}{\partial \dot{x}}=\dot{x}-F(x).
\end{equation*}
Then, $\dot{x}=F(x)+p$,  illustrates the connection between the optimal fluctuation $\eta$ and the classical momentum $p$. The Legendre transformation from the Lagrangian $L$ to the Hamiltonian $H$ is characterized by
\begin{equation*}
H(x,p)=p\cdot\dot{x}-L(x,\dot{x})=\big(\dot{x}-F(x)\big)\cdot\dot{x}-\frac{1}{2}\big(\dot{x}-F(x)\big)^{2}=\frac{1}{2}\big(\dot{x}-F(x)\big)^{2}+\big(\dot{x}-F(x)\big)\cdot F(x)=\frac{1}{2}p^{2}+p\cdot F(x).
\end{equation*}
The inverse of the Legendre transformation \cite{YD} can be used to obtain
the Lagrangian from the Hamiltonian, i.e., $L(x,\dot{x})=p\cdot\dot{x}-H(x,p)$.
\begin{lemma} (Hamiltonian formulation)
If $x(t)$ is a stationary point of the action functional \eqref{AF}, then it satisfies Hamilton's equations.
\end{lemma}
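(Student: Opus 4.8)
The plan is to obtain Hamilton's equations from the Euler--Lagrange equation through the Legendre transformation already set up in the text, rather than re-running the full variational argument. By hypothesis $x(t)$ is a stationary point of \eqref{AF}, so the first-variation computation $\delta\mathcal{S}_{T}[x]=0$ carried out above shows that $x(t)$ satisfies the Euler--Lagrange equation
\[
\frac{d}{dt}\Big(\frac{\partial L}{\partial\dot{x}}\Big)=\frac{\partial L}{\partial x}.
\]
In terms of the conjugate momentum $p=\partial L/\partial\dot{x}$ this reads simply $\dot{p}=\partial L/\partial x$, which will supply one of the two Hamilton equations once I relate $\partial L/\partial x$ to $\partial H/\partial x$.

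First I would record that the Legendre transform is genuinely invertible here: since $L(x,\dot{x})=\tfrac12|\dot{x}-F(x)|^{2}$ is strictly convex in $\dot{x}$ (its Hessian in $\dot{x}$ is the identity), the relation $p=\dot{x}-F(x)$ can be solved uniquely for $\dot{x}=p+F(x)$, so that $H(x,p)=p\cdot\dot{x}-L$ is a bona fide function of the independent pair $(x,p)$. This is the only analytic, as opposed to purely algebraic, ingredient in the argument.

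The key step is a differential computation. Writing $H=p\cdot\dot{x}-L(x,\dot{x})$ and taking the total differential,
\[
dH=\dot{x}\cdot dp+p\cdot d\dot{x}-\frac{\partial L}{\partial x}\cdot dx-\frac{\partial L}{\partial\dot{x}}\cdot d\dot{x}.
\]
Because $p=\partial L/\partial\dot{x}$, the two $d\dot{x}$ terms cancel, leaving $dH=\dot{x}\cdot dp-(\partial L/\partial x)\cdot dx$. Comparing this with the expansion $dH=(\partial H/\partial p)\cdot dp+(\partial H/\partial x)\cdot dx$ taken in the variables $(x,p)$ and matching coefficients yields
\[
\frac{\partial H}{\partial p}=\dot{x},\qquad \frac{\partial H}{\partial x}=-\frac{\partial L}{\partial x}.
\]
The first identity is already the first Hamilton equation $\dot{x}=\partial H/\partial p$; combining the second identity with the Euler--Lagrange relation $\dot{p}=\partial L/\partial x$ gives $\dot{p}=-\partial H/\partial x$, the second Hamilton equation. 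As a consistency check I would substitute the explicit $H=\tfrac12 p^{2}+p\cdot F(x)$ to verify directly that $\partial H/\partial p=p+F(x)=\dot{x}$ and $\dot{p}=-\partial H/\partial x=-(\nabla F(x))^{\top}p$.

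The main obstacle is purely one of bookkeeping: one must keep straight that $L$ is differentiated in the $(x,\dot{x})$ chart while $H$ is differentiated in the $(x,p)$ chart, so that the symbol $\partial/\partial x$ denotes different operations on the two sides until the cancellation of the $d\dot{x}$ terms reconciles them. Establishing the convexity and hence invertibility of the Legendre map at the outset is precisely what legitimizes this change of variables and makes the coefficient matching unambiguous.
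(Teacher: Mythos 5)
Your proof is correct, but it follows a genuinely different route from the paper's. The paper proves the lemma by a phase-space variational argument: it rewrites the action as $\mathcal{S}_{T}[x,p]=\int_{0}^{T}\big(p\cdot\dot{x}-H(x,p)\big)\,dt$, varies $x$ and $p$ as \emph{independent} functions, integrates the $p\cdot\delta\dot{x}$ term by parts (using that $\delta x$ vanishes at the endpoints), and reads off both Hamilton equations from the vanishing of the coefficients of $\delta p$ and $\delta x$. You instead stay in configuration space: you invoke the Euler--Lagrange equation $\frac{d}{dt}\big(\partial L/\partial\dot{x}\big)=\partial L/\partial x$ (already derived in the text preceding the lemma) and convert it into Hamilton's equations through the Legendre transform, using the total-differential identity $dH=\dot{x}\cdot dp-(\partial L/\partial x)\cdot dx$ to obtain $\partial H/\partial p=\dot{x}$ and $\partial H/\partial x=-\partial L/\partial x$. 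Both arguments are standard and complete. Your version has the advantage of making explicit the analytic hypothesis the paper leaves implicit --- strict convexity of $L$ in $\dot{x}$, hence invertibility of $p=\dot{x}-F(x)$ --- which is precisely what legitimizes treating $H$ as a function of the pair $(x,p)$; it also reuses the variational computation already carried out in the text rather than repeating one. The paper's version buys independence from the Legendre-inversion bookkeeping: the identity $\dot{x}=\partial H/\partial p$ emerges from the variation itself rather than from the definition of $p$, and the endpoint argument for discarding the boundary term is displayed explicitly. Note finally that your consistency check $\dot{p}=\partial L/\partial x=-\big(\nabla F(x)\big)^{\top}p$ reproduces exactly the system \eqref{HamiltonE}, confirming the two routes land in the same place.
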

\begin{proof}
Rewriting the action functional in \eqref{AF}  into
\begin{equation*}
\mathcal{S}_{T}[x,p]=\int_{0}^{T}\big(p\cdot\dot{x}-H(x,p)\big)dt.
\end{equation*}
Since $\mathcal{S}_{T}$ is stationary, $\delta\mathcal{S}_{T}$ must vanish. Consequently,
\begin{align*}
\delta\mathcal{S}_{T}[x,p]&=\mathcal{S}_{T}[x+\delta x,p+\delta p]-\mathcal{S}_{T}[x,p]\\
                          &=\int_{0}^{T}[(p+\delta p)\cdot(\dot{x}+\delta\dot{x})-H(x+\delta x,p+\delta p)]dt-\int_{0}^{T}\big(p\cdot\dot{x}-H(x,p)\big)dt\\
                          &=\int_{0}^{T}\Big(\delta p\cdot\dot{x}+p\cdot\delta\dot{x}-\frac{\partial H}{\partial x}\cdot\delta x-\frac{\partial H}{\partial p}\cdot\delta p\Big)dt\\
                          &=\int_{0}^{T}\Big[\Big(\dot{x}-\frac{\partial H}{\partial p}\Big)\cdot\delta p-\Big(\dot{p}+\frac{\partial H}{\partial x}\Big)\cdot\delta x\Big]dt=0,
\end{align*}
where the third equality holds by using the expansion and keeping only those that are of the first order in
the small quantities $\delta x$ and $\delta p$. The fourth equality holds by isolating $\delta x$, integrating the term $p\cdot\delta\dot{x}$ by parts, and rearranging these terms. For $p\cdot\delta\dot{x}=\frac{d}{dt}(p\cdot\delta x)-\dot{p}\cdot\delta x$, the
integral of the total derivative term vanishes since $\delta x$ is zero vector at the endpoints, i.e.,
\begin{equation*}
\int_{0}^{T}\frac{d}{dt}(p\cdot\delta x)dt=p(T)\cdot\delta x(T)-p(0)\cdot\delta x(0)=p(T)\cdot\delta \bar{x}-p(0)\cdot\delta x_0=0.
\end{equation*}
Because $\delta x$ and $\delta p$ are arbitrary and
independent functionals, and the integrand is continuous, each of the
parenthesized terms above must vanish in order to obtain the last equality. Therefore, the path $x(t)$ satisfies Hamilton's equations
\begin{equation}\label{HamiltonE}
  \left\{
    \begin{array}{ll}
      \dot{x}=\frac{\partial H}{\partial p}=p+F(x), &  \\[1ex]
      \dot{p}=-\frac{\partial H}{\partial x}=-(\nabla F)^{\top}p. &
    \end{array}
  \right.
\end{equation}
\end{proof}

In addition, there is an another method, the so-called WKB approximation \cite{KS}, to derive the Hamiltonian formulation. Its kernel idea is to express the stationary or quasi-stationary distribution as the WKB form
\begin{equation}
	\label{WKB}
	p_{s} (x) \sim C(x) \text{exp} \left\{ -\frac{ V(x)}{\sigma} \right\},
\end{equation}
where $V(x)$ is the quasipotential defined in Eq. (\ref{QP}), and $C(x)$ is an exponential prefactor. Substituting it into stationary Fokker-Planck equation $\frac{1}{2}\Delta p(x)-\nabla p(x)\cdot F(x)=0$ and collecting the lowest-order terms of $\sigma$ yield a nonlinear partial differential equation called Hamilton-Jacobi equation
\begin{equation}
	\label{HJB}
	H\left(x, \nabla V(x) \right) := \big\langle \nabla V(x), F(x) \big\rangle + \frac{1}{2} \big\langle \nabla V(x), \nabla V(x) \big\rangle=0.
\end{equation}
There exists Ordered Upwind Method for solving Hamilton-Jacobi equations; see Sethian and Vladimirsky \cite{SV}.
\begin{remark}
The quasipotential is one of the solutions for \eqref{HJB}. But it is never unique! A simple observation shows that Eq. (\ref{HJB}) always has a trivial solution $V=0$.
\end{remark}
Note that there exists a geometric meaning in Eq. (\ref{HJB}), i.e., $\nabla V(x)$  is perpendicular
to $F(x) + \frac{1}{2} \nabla V(x)$. Assuming $l(x):=\frac{1}{2} \nabla V(x)+F(x)$ leads to an orthogonal decomposition of the vector field,
\begin{equation}\label{F}
F(x)=-\frac{1}{2}\nabla V(x)+\Big(\frac{1}{2}\nabla V(x)+F(x)\Big)\overset{\triangle}{=}-\frac{1}{2}\nabla V(x)+l(x).
\end{equation}
\begin{remark}
The orthogonal decomposition of the vector field can be seen as the rotation component $l(x)$ plus the potential component $-\frac{1}{2}\nabla V(x)$ (derived from the quasipotential gradient). If $l(x)\equiv0$, then the vector field $F(x)=-\frac{1}{2}\nabla V(x)$  is reduced as a gradient field.
\end{remark}

The quasipotential can be calculated via applying method of characteristics to the Hamilton-Jacobi equation (\ref{HJB}) and integrated by
\begin{equation}
	\label{Vdot}
	\dot{V}=p \cdot \dot{x} = \frac{1}{2} p^{T}p
\end{equation}
along with the parameterized trjectories of \eqref{HamiltonE}, where the momentum $p=\nabla V(x)$. If we know the quasipotential $V(x)$, then the most probable path can be computed by
\begin{equation}\label{xdot}
\dot{x}=F(x)+p=F(x)+\nabla V(x)=\frac{1}{2}\nabla V(x)+l(x).
\end{equation}
In other words, the minimum action paths are determined by $\|\dot{x}\|=\|\frac{1}{2}\nabla V(x)+l(x)\|$. Denote $G(x):=\frac{1}{2}\nabla V(x)+l(x)$, $\dot{x}$ is roughly parallel to $G(x)$.

Suppose $l(x)$ is not identically zero in \eqref{F}. As shown in Fig. \ref{fig1}, $l(x)$ is orthogonal to $\nabla V(x)$, where $F(x)$ is a smooth nongradient vector field with a finite number of isolated attractors. The situation is more complicated. The effect of noise is equivalent to keeping the rotation component unchanged, but flipping the inward potential component outward, so that it can consume the least energy to push the system out of the basin of attraction. Note that the estimates for the invariant probability measure, most probable transition paths, and mean exit times from basins of attractors, rely on the computation of quasipotential, according to Remark \ref{remark1} and Eq. (\ref{xdot}). Unfortunately, the decomposition of $F(x)$ can be done analytically only in special cases. Hence, we develop deep learning methods to compute the quasipotential based on Hamilton-Jacobi equation and then devise a control strategy to control mean exit time to achieve a desired value.

\begin{figure}
	\centering
	\includegraphics[width=7cm]{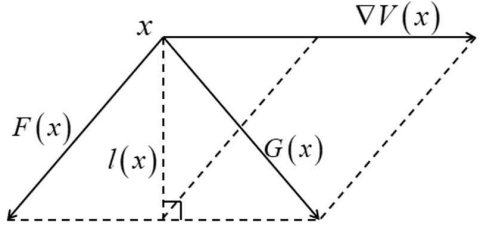}
	\caption{Orthogonal decomposition of vector field.}
	\label{fig1}
\end{figure}

\section{Control strategy}\label{C}
Thus far, as a powerful tool of machine learning or deep learning, the field of neural networks has generated a phenomenal
amount of interest from a broad range of scientific disciplines. One of the
reasons for this is adaptability. Innovative architectures and new training
rules have been tested on powerful computers. The disciplines of networks and nonlinear dynamics have increasingly coalesced.
The vast majority of real-world applications have relied on the backpropagation
algorithm for training multilayer networks, and recently kernel machines have
proved to be useful for a wide range of applications, including document
classification, gene analysis, pattern recognition, computer vision, credit card fraud, prediction and forecasting, disease recognition, facial
and speech recognition, the consumer home entertainment market, psychological profiling, predicting wave over-topping events, data-driven modelling, and control problems.

The aim of this section is to show that the system \eqref{SM} can achieve the desired mean exit time as we wish after adding the control term. To gain the desired mean exit time, we set
\begin{equation}\label{CS}
\dot{x}=u+F(x)+\eta.
\end{equation}
The control $u$ is designed to be proportional to the gradient of quasipotential. We choose $u=c\nabla V(x)$, where $c\leq\frac{1}{2}$ are being used as controllers. According to the orthogonal decomposition of the vector field (\ref{F}), the new quasipotential of Eq. (\ref{CS}) is controlled by the parameter $c$. Thus the desired mean exit time can be conveniently achieved via adjusting the parameter $c$ based on the relation between mean exit time and quasipotential.

\begin{remark}
There are two advantages of this control strategy. On one hand, the shape of quasipotential does not change after adding this control, so the most probable path does not change. On the other hand, since the rotational component does not contribute to the mean exit time, the control varying the potential component directly costs minimum energy input.
\end{remark}
How to compute quasipotential and gradient of quasipotential?
Traditional methods of computing quasipotential and its gradient include action plot, string method and ordered upwind method (OUM).
\begin{itemize}
  \item The action plot method \cite{BMLSM} is to take a small circle around the stable fixed point. Integrating both the Hamiltonian system and Eq. \eqref{F} yields $(x(t),p(t),V(t))$. But unfortunately there are two disadvantages, one is that the quasipotential and momentum can only be calculated on the characteristic line, and consequently the direct mapping relationship between the quasipotential or momentum and the coordinates cannot be obtained; the other is that when the integrated Lagrange manifold is folded, the multiple local extrema of the action functional appear and we will obtain several quasipotential values, of which the actual quasipotential is the smallest one, i.e., the global extremum.
  \item The string method \cite{T2018} iteratively calculates the most probable path starting from a given initial path by way of fixing two endpoints. It allows us to get the quasipotential value of the endpoint at the same time. The disadvantage is that it can only be used to calculate the quasipotential of one point, and then the global quasipotential structure cannot be displayed.
  \item The OUM \cite{SMK} ensures that the quasipotential of the whole field can be obtained by discretizing the phase space into a mesh and calculating the quasipotential of the nodes point by point. However, the calculation is extremely time-consuming.
\end{itemize}
\begin{remark}
The action plot method, the string method and the OUM have their flaws. In contrast, the advantage of machine learning lies in a architecture allowing for very fast
computational and response times. The benefits of using machine learning
are well documented.
Therefore, in this section, we design a machine learning method for computing quasipotential and its gradient (i.e., momentum) using neural networks and automatic differentiation techniques \cite{RPK}.

\end{remark}

\begin{figure}[H]
	\centering
	\includegraphics[width=7cm]{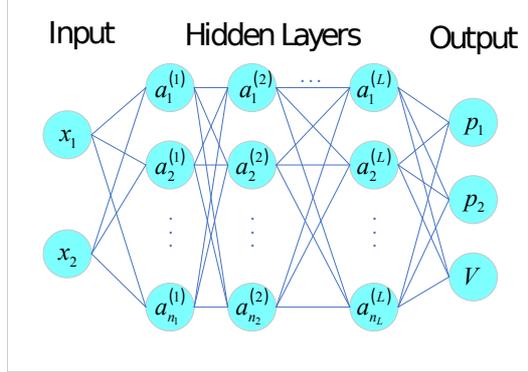}
	\caption{Architecture of the feedforward artificial neural network with $L$ hidden layers for connecting neurons. The input is the coordinate $x$, the output is the momentum $p$ and the quasipotential $V$, and $a^{l}_{j}$ denotes the value of the $j$-th neuron in $l$-th hidden layer for $j=1,\cdots,n_l$, $l=1,\cdots,L$.}
	\label{fig2}
\end{figure}

As depicted in Fig. \ref{fig2}, the schematic of the structure of the neural network is made up of three basic components: two inputs $x_1$ and $x_2$,  $L$ hidden layers composed of multiple neurons, three outputs $p_1$, $p_2$ and $V$, which are linked by edges. The activation function of the hidden layer is chosen as the hyperbolic tangent function $\tanh(x)$, and the activation function of the output layer is chosen as the identity function. It should be pointed out that the numbers of neurons in all layers do not necessarily need to be the same in the architecture.

Next, we should define the loss function for training the neural network.
According to the properties of quasipotential, the post-training quasipotential $\widetilde{V}(x)$ and momentum $\widetilde{p}(x)$ need to satisfy three constraints: $p=\nabla V(x)$, Hamilton-Jacobi equation, and the quasipotential of the stable fixed point $x_s$ being zero. We select $N_H$ collocation points on phase space to realize these constraints in the loss function \cite{LXDLC}.

However, it should be noted that the solution of the quasipotential and momentum which are identically equal to 0 always satisfies these three constraints. In order to exclude this trivial solution, we use the action plot method to integrate Eqs. (\ref{HamiltonE}) and (\ref{Vdot}), and then gain a part of the data as the fourth part of the loss function. To rule out the influence of the local extreme value of the action calculated by the action plot method due to singularity, we can adopt the following method. We discretize the phase space first and keep the grid not too dense. Then we select enough points from the small circle near the fixed point, and compute the integral to obtain a large number of extreme paths. In each grid, we collect the points of the extreme paths located in this grid and record the coordinate $x^i$, momentum $p^i$ and quasipotential $V^i$ of the point corresponding to the minimum value of the quasipotential among these points, which are taken as a set of data. The total number of the data is $N_d$.

Above all, the loss function is designed as follows. The first part of the loss function is
\begin{equation}
	\label{Lp}
	L_p=\frac{1}{N_H} \sum^{N_H}_{i=1} \left| \widetilde{p} (x^i)- \nabla \widetilde{V} (x^i)  \right| ^2.
\end{equation}
The second part of the loss function is
\begin{equation}
	\label{LH}
	L_H= \frac{1}{N_H} \sum^{N_H}_{i=1} H \left( x^i, \widetilde{p} (x^i)  \right) ^2.
\end{equation}
The third part of the loss function is
\begin{equation}
	\label{L0}
	L_0= \widetilde{V} (x_s) ^2.
\end{equation}
The fourth part of the loss function is
\begin{equation}
	\label{Ld}
	L_d=\frac{1}{N_d} \sum^{N_d}_{i=1} \left[ \left| \widetilde{p} (x^i)- p^i  \right| ^2 + \left( \widetilde{V} (x^i)- V^i  \right) ^2 \right].
\end{equation}
Therefore, the total loss function is computed as
\begin{equation}
	\label{lossall}
	L_{\text{all}}=L_p+ L_H +L_0 +L_d.
\end{equation}

The trained neural network can output the momentum and quasipotential at the specified position. Since the lowest point of the quasipotential on the boundary is usually the saddle point. We start from the point near the saddle point to calculate the time-reverse integral of the equation \eqref{Vdot}, where we use neural network calculation to get the momentum value. Eventually, we obtain the most probable path for the system to leave the attractive basin.

Next we present our control strategy to realize the aim of controlling mean exit time of stochastic systems via adjusting the parameter $c$ based on the functionality of the neural network.
After adding $u=c\nabla V(x)$ to $\dot{x}=F(x)$,
\begin{equation*}
\dot{x}=F(x)+c\nabla V(x)=-\frac{1}{2}\nabla V(x)+l(x)+c\nabla V(x)=-\frac{1}{2}(1-2c)\nabla V(x)+l(x).
\end{equation*}
For this reason, the new quasipotential is $(1-2c)V(x)$. Particularly the new quasipotential at the saddle point is $(1-2c)V_0$, where we use
 neural networks to calculate $V_0$.

Assume that our desired time is $T_d$. We notice that under the control,
\begin{equation}\label{Td}
T_d\sim\exp\Big\{\frac{(1-2c)V_0}{\sigma}\Big\},
\end{equation}
and then it follows that
\begin{equation}\label{cvalue}
c\approx\frac{1}{2}-\frac{\sigma}{2V_0}\ln T_d.
\end{equation}
Considering the influence of the prefactor $b$, we advance the algorithm by updating the value of $c$.
Given $c_1=\frac{1}{2}-\frac{\sigma}{2V_0}\ln T_d$, inserting it into \eqref{CS} gives $\dot{x}=c_1 \nabla V(x)+F(x)+\eta$.
Based on the gradient of quasipotential computed by neural network and Monte Carlo method, we simulate some sample trajectories to exit the considered domain and calculate the mean exit time $T_1$ by averaging these random exit times. Note that
\begin{equation*}
T_1=b\exp\Big\{\frac{(1-2c_1)V_0}{\sigma}\Big\}, \ \ \ T_d=b\exp\Big\{\frac{(1-2c_d)V_0}{\sigma}\Big\}.
\end{equation*}
Thus we have
\begin{equation}\label{cupdate}
 c_d=c_1+\frac{\sigma}{2V_0}\ln\frac{T_1}{T_d}.
\end{equation}
This process can be iterated continuously if the accuracy of Eq. (\ref{cupdate}) is not sufficiently satisfactory.

It needs to be emphasized that this strategy can achieve any desired mean exit time when we choose appropriate constant $c$. When $c=1/2$, the potential well of the system is zero, and the system exits quickly. When $c$ tends to negative infinity, the mean exit time goes to infinity. Therefore, our control strategy can make it possible for the mean exit time to take a certain value from small positive number to infinity.

\section{Numerical experiments}\label{N}
We now demonstrate how the update strategy of constant $c$
could help to further improve on the performance of the algorithm.
Take the Maier-Stein system \cite{MS} as an example to verify the effectiveness of the control strategy:
\begin{equation}\label{Ms}
\left\{
  \begin{array}{ll}
\dot{x}_1=x_1-x_1^{3}-\gamma x_1x_2^{2}+\sqrt{\sigma}\eta_1(t), &  \\[1ex]
\dot{x}_2=-\big(1+x_1^{2}\big)x_2+\sqrt{\sigma}\eta_2(t), &
  \end{array}
\right.
\end{equation}
where the system parameter $\gamma>0$. We are able to rewrite it in the form of \eqref{SM}, if we identify
\begin{equation*}
x:=\left(
     \begin{array}{c}
       x_1 \\
       x_2 \\
     \end{array}
   \right),\quad F(x):=\left(
     \begin{array}{c}
       x_1-x_1^{3}-\gamma x_1x_2^{2} \\
       -\big(1+x_1^{2}\big)x_2 \\
     \end{array}
   \right),\quad B:=\sqrt{\sigma}\left(
                                   \begin{array}{cc}
                                     1 & 0 \\
                                     0 & 1 \\
                                   \end{array}
                                 \right),\quad \eta(t):=\left(
     \begin{array}{c}
       \eta_1(t) \\
       \eta_2(t) \\
     \end{array}
   \right).
\end{equation*}

When $\sigma=0$, we locate the fixed points of the deterministic model $\dot{x}=F(x)$ by solving the equations $\dot{x}_1=\dot{x}_2=0$. Hence
$\dot{x}_2=0$ if $x_2=0$, and then $\dot{x}_1=0$ if $x_1-x_1^{3}=0$, which
has solutions $x_1=0$ and $x_1=\pm1$. Therefore, there are three fixed points $(0,0)$ and $(\pm1,0)$.
Linearize by finding the Jacobian matrix:
\begin{equation*}
 J=\left(\begin{array}{cc}
     1-3x_1^2-\gamma x_2^2 & -2\gamma x_1x_2 \\
     -2x_1x_2 & -(1+x_1^2)
   \end{array}\right).
\end{equation*}
Linearize at the origin:
\begin{equation*}
 J_{(0,0)}=\left(\begin{array}{cc}
     1 & 0 \\
     0 & -1
   \end{array}\right).
\end{equation*}
There is one positive and one negative eigenvalue, and so this fixed point is a
saddle point.
 For other fixed points,
\begin{equation*}
 J_{(\pm1,0)}=\left(\begin{array}{cc}
     -2 & 0 \\
     0 & -2
   \end{array}\right).
\end{equation*}
There are two distinct negative eigenvalues and hence
the fixed points are stable nodes.

Note that the matrices $J_{(0,0)}$ and $J_{(\pm1,0)}$ are in diagonal form. The eigenvectors
for both fixed points are $(1,0)^{T}$ and $(0,1)^{T}$. Thus in a small neighborhood
around each fixed point, the stable and unstable manifolds are tangent to the
lines generated by the eigenvectors through each fixed point. Therefore, near each
fixed point the manifolds are horizontal and vertical. The manifolds of the nonlinear system
$W_s$ and $W_u$ need not be straight lines but are tangent to the subspaces $E_s$ and $E_u$ at the relevant fixed point.

Consider the isoclines. Now $\dot{x}_2=0$ on $x_2=0$, and on this line $\dot{x}_1=x_1-x_1^{3}$. Thus
if $x_1>1$ or $-1<x_1<0$, then $\dot{x}_1<0$, and if $x_1<-1$ or $0<x_1<1$, then $\dot{x}_1>0$. Also, $\dot{x}_1=0$ on the circle $\dot{x}_1^2+\gamma\dot{x}_2^2=1$ with $\gamma>0$, and on this curve $\dot{x}_2=(\gamma\dot{x}_2^2-2)x_2$. Thus if $x_2>\sqrt{2/\gamma}$ or $-\sqrt{2/\gamma}<x_2<0$, then $\dot{x}_2> 0$, and if $x_2<-\sqrt{2/\gamma}$ or $0<x_2<\sqrt{2/\gamma}$, then $\dot{x}_2<0$.
 The slope of the trajectories is given by
\begin{equation*}
 \frac{dx_2}{dx_1}=\frac{-\big(1+x_1^{2}\big)x_2}{x_1-x_1^{3}-\gamma x_1x_2^{2}}.
\end{equation*}

\begin{figure}
	\centering
	\subfigure[$\gamma=1$]
	{\includegraphics[width=7cm]{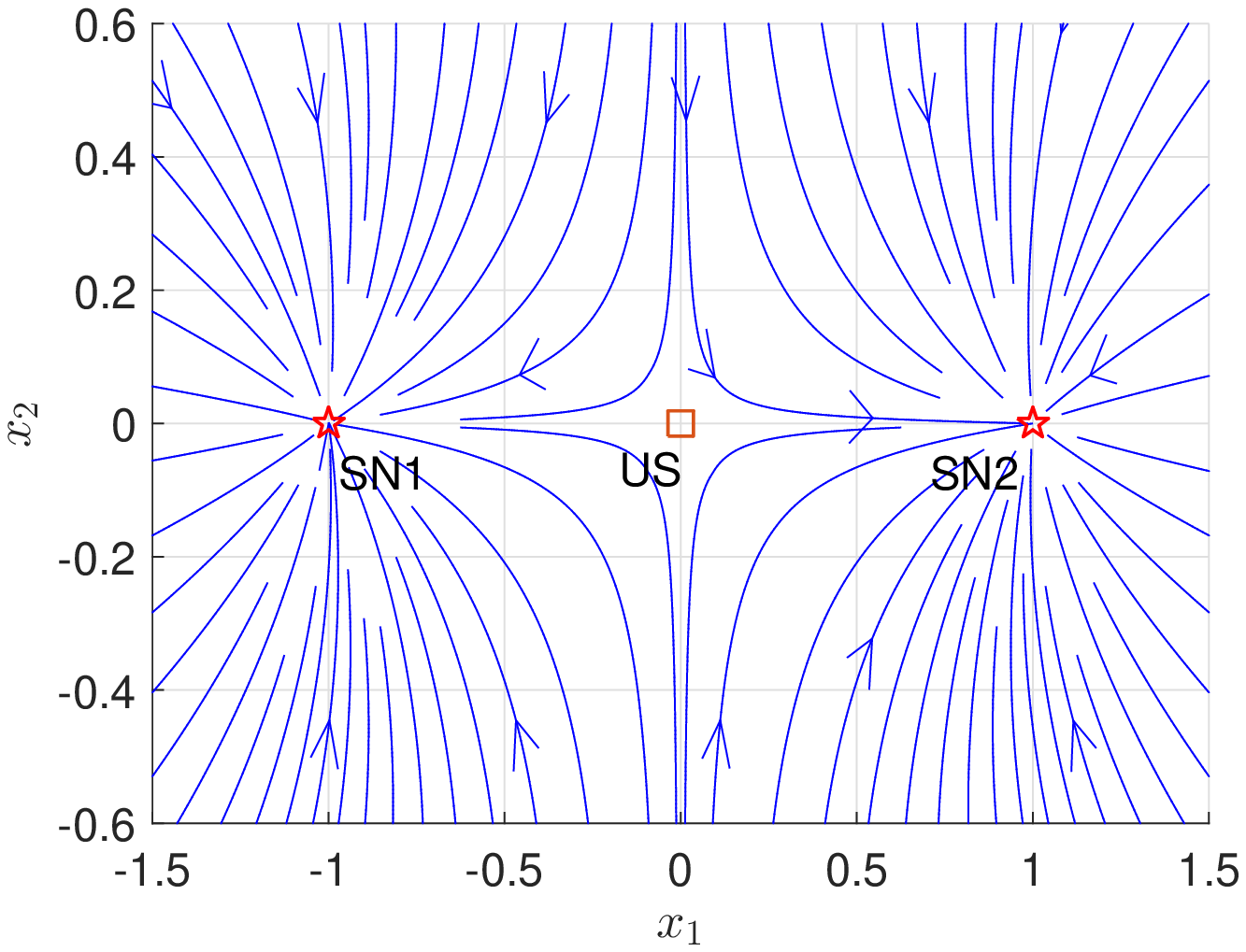}}
	\subfigure[$\gamma=5$]
	{\includegraphics[width=7cm]{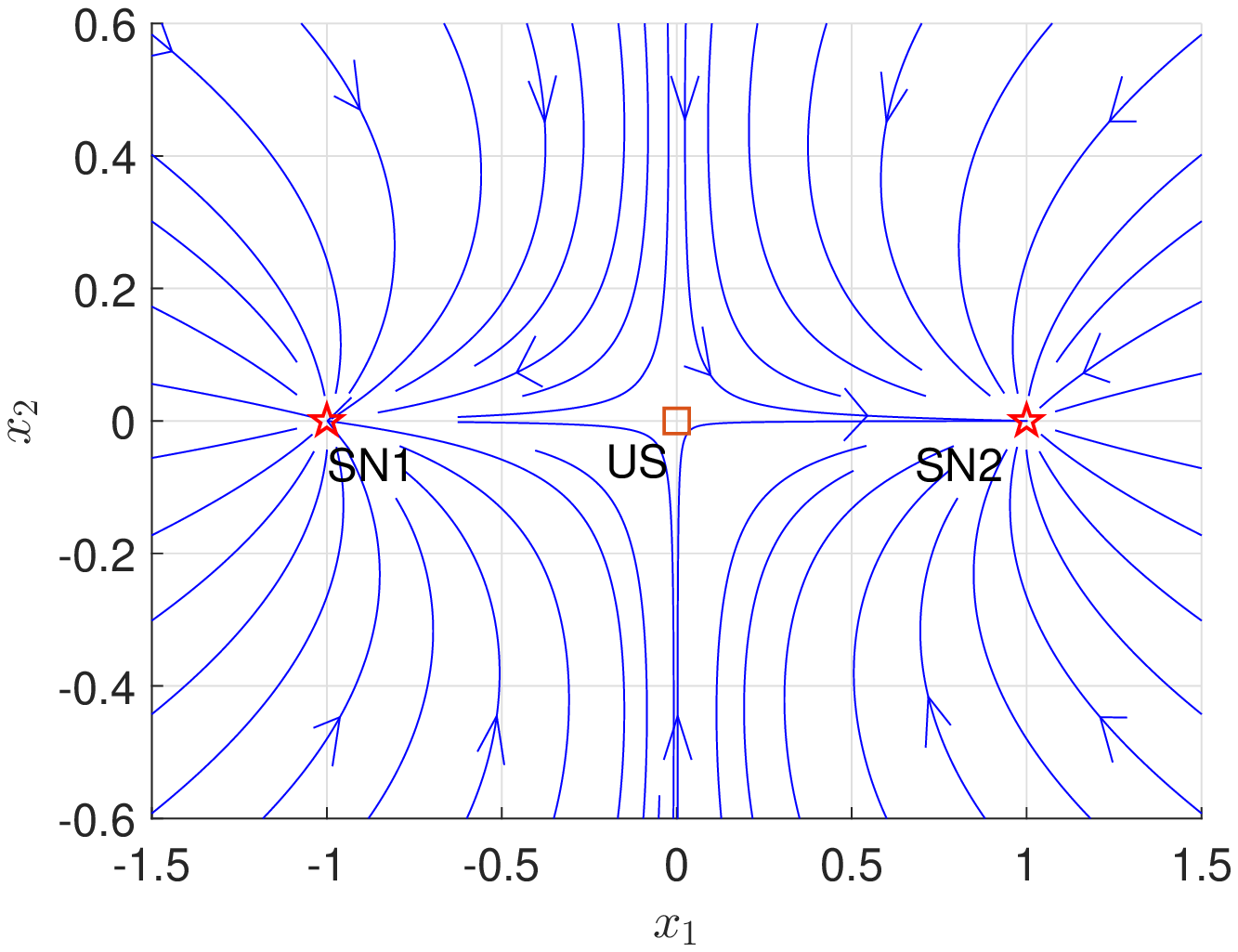}}
	\caption{Vector fields of Maier-Stein system for $\gamma=1$ and $\gamma=5$, where SN1 and SN2 denote two stable nodes and US indicates a unstable saddle.}
	\label{fig3}
\end{figure}

As can be seen from Fig. \ref{fig3}, we clearly illustrate the structure of the deterministic system $\dot{x}=F(x)$ using phase portraits with US at $(0,0)$, SN1 at $(-1,0)$ and SN2 at $(1,0)$. Due to the left-right symmetry of the system, we only need to consider the case of escaping from the basin of SN1 to the right.

Parameter settings allow us to customize various values to suit our needs.
The neural network is composed of 4 hidden layers, each layer has 20 neurons, the optimizer selects Adam, the learning rate is 0.02, and the training contains 50,000 steps. There are 5000 points selected randomly in the area $[-1.5,0]\times[-0.6,0.6]$, i.e., $N_H=5000$. Divide this area into a $20*20$ grid, i.e.,
$N_d=400$.

Utilizing the action plot method, 2000 starting points are uniformly selected on the small circle near the fixed point, and 2000 parameterized trajectories are obtained by integration. The coordinate, momentum and quasipotential of the point with the smallest quasipotential falling in each grid are extracted, and then 400 groups of data are collected. The control increases or decreases the mean exit time.
The next step is to consider two cases of $\gamma=1$ and $\gamma=5$ respectively.

\textbf{Case A}. $\gamma=1$ corresponds to the gradient system or equilibrium system with a potential function
\begin{equation*}
	U(x)=\frac{1}{4} \left[ \left( x^2_1 -1 \right) ^2 + 2x^2_2 \left( x^2_1 +1 \right) \right].
\end{equation*}
According to Freidlin-Wentzell large deviation theory, the quasipotential in this case is exactly twice of the potential function, i.e., $V(x)=2U(x)$.

The loss function after training is reduced to the order of $10^{-5}$. Fig. \ref{fig4} shows a comparison of Learned and True quasipotentials, which are very consistent. It demonstrates how Learned quasipotential may be used to compute True quasipotential in a manner entirely consistent with modern architectures.
\begin{figure}[H]
	\centering
	\subfigure[True quasipotential]
	{\includegraphics[width=7cm]{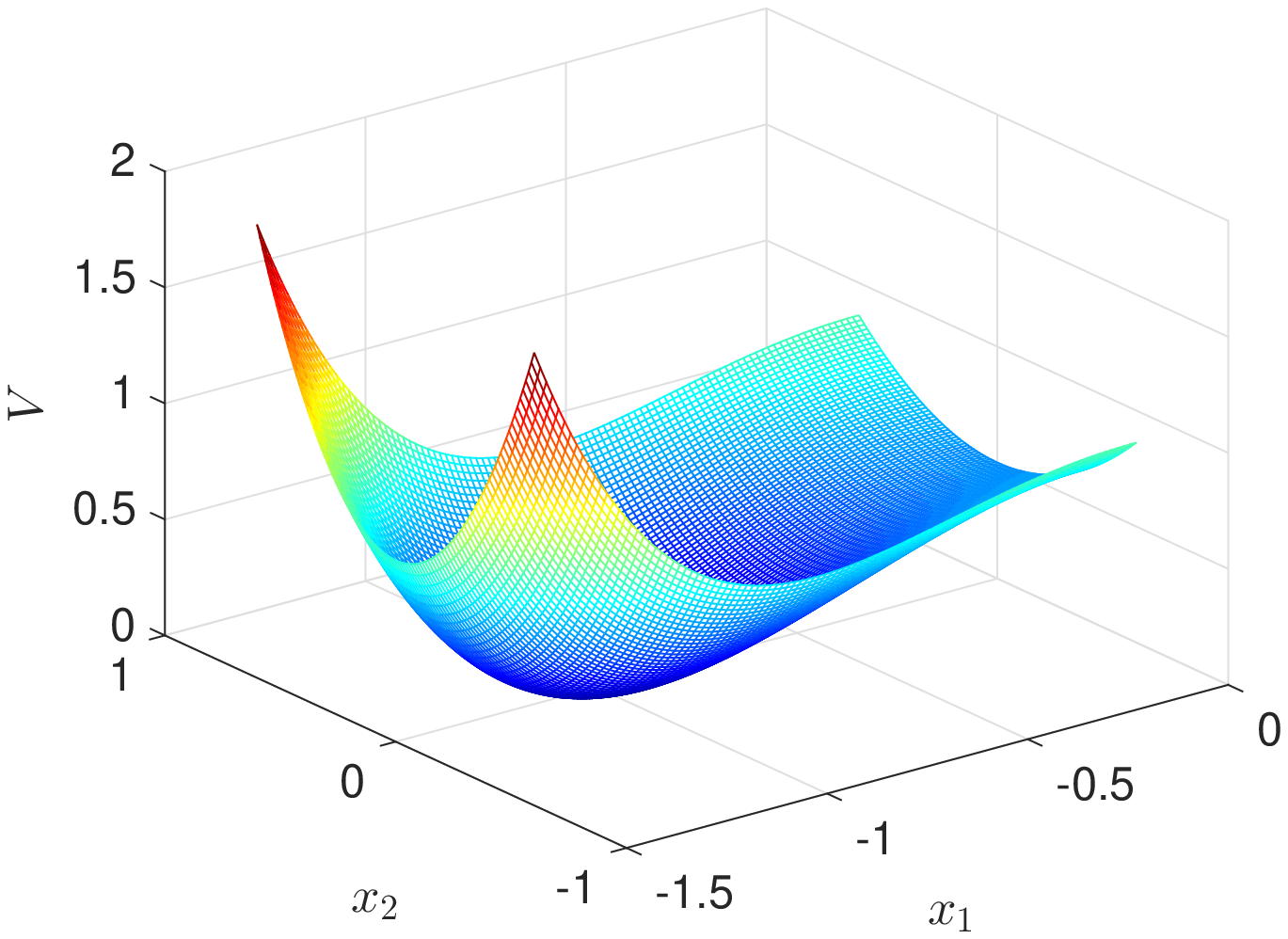}}
	\subfigure[Learned quasipotential]
	{\includegraphics[width=7cm]{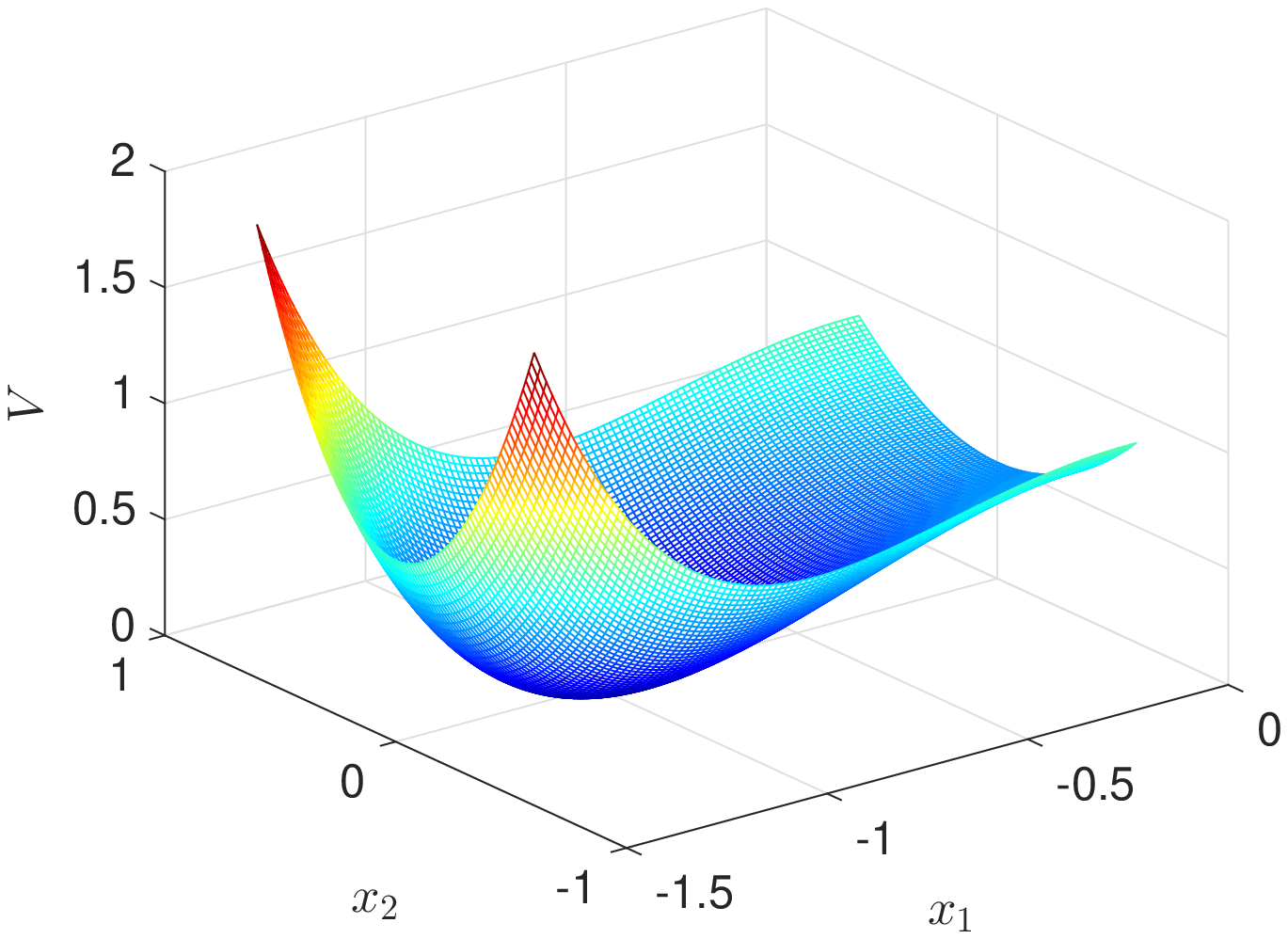}}
	\caption{Comparison between Learned and True quasipotential for $\gamma=1$.}
	\label{fig4}
\end{figure}

The most probable path is  found by reverse-time integration of Eq. (\ref{xdot}) starting from a point near the left of the saddle point $(0,0)$, as shown in Fig. \ref{fig5}. Since the potential system satisfies the detailed balance condition, the most probable path starting from the stable fixed point $(-1,0)$ to the saddle UN is exactly the time reverse of the deterministic trajectory starting from the saddle to the fixed point, which is the horizontal line. It is seen that our experiment results are consistent with the theoretical ones.
\begin{figure}[H]
	\centering
	\includegraphics[width=7cm]{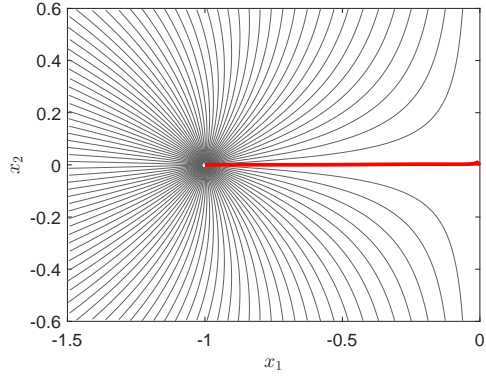}
	\caption{The most probable path denoted by red line and extreme paths for $\gamma=1$.}
	\label{fig5}
\end{figure}

\begin{table}[htbp]
	\centering
	\caption{Controlling mean exit time for $\gamma=1$}
	\begin{tabular}{ccccccc}
		\toprule
		\multicolumn{1}{c}{$\sigma$} & \multicolumn{1}{c}{$T_0$} & \multicolumn{1}{c}{$T_d$} & \multicolumn{1}{c}{$c_1$} & \multicolumn{1}{c}{$T_{c_1}$} & \multicolumn{1}{c}{$c_2$} & \multicolumn{1}{c}{$T_{c_2}$} \\
		\midrule
		0.15 & 58.08  & 100 & -0.1901 & 136.94 & -0.1430 & 104.43 \\
		0.1 & 270.43  & 100 & 0.0399 & 202.07 & 0.1102 & 115.74 \\
		0.05 & 39157.98  & 100 & 0.2700 & 417.52 & 0.3414 & 146.26 \\
		0.15 & 58.08  & 1000 & -0.5351 & 877.87 & -0.5546 & 950.71 \\
		0.1 & 270.43  & 1000 & -0.1901 & 1296.43 & -0.1642 & 1014.83 \\
		0.05 & 39157.98  & 1000 & 0.1550 & 2645.23 & 0.2035 & 1191.72 \\
		\bottomrule
	\end{tabular}
	\label{table1}
\end{table}
Next we examine our control strategy and describe the process of the algorithm.
From the list items in Table \ref{table1}, we choose three noise intensities $\sigma=0.15, 0.1$ and $0.05$ such that the uncontrolled mean exit time $T_0=58.08$ is less than 100, $T_0=270.43$ is greater than 100 but less than 1000, and $T_0=39157.98$ is far greater than 1000. To estimate the mean exit time, we simulate 1000 random  trajectories and calculate the expected value.
When the desired mean exit time is $T_d=100$,
we have $T_{c_1}=136.94, 202.07$ and $417.52$ if we set $c_1=-0.1901, 0.0399$ and $0.2700$ respectively. We can make far more precise correction to get $T_{c_2}=104.43, 115.74$ and $146.26$ with $c_2=-0.1430, 0.1102$ and $0.3414$. It is worthy to note that $T_{c_2}$ is closer to $T_d=100$ than $T_{c_1}$ for all the three cases.
When the desired mean exit time is $T_d=1000$, we get $T_{c_1}=877.87, 1296.43$ and $2645.23$ if we set $c_1=-0.5351, -0.1901$ and $0.1550$  respectively.
Furthermore, we obtain $T_{c_2}=950.71, 1014.83$ and $1191.72$ with $c_2=-0.5546, -0.1642$ and $0.2035$. After a correction,  the mean exit time is almost as big as the desired value $T_d=1000$. If the accuracy of the correction is not enough, such as in the case of  $\sigma=0.05$, $T_d=100$, multiple corrections can be further performed by taking $c_3=0.3604$ to gain $T_{c_3}=113.70$. Therefore, our control strategy can customize the mean exit time to reach the desired value.

\textbf{Case B}. $\gamma=5$ corresponds to non-gradient system or non-equilibrium system without a potential function.

The loss function after training is reduced to the order of $10^{-4}$. Learned quasipotential for $\gamma=5$ is plotted in Fig. \ref{fig6}. The most probable path is obtained by reverse-time integration starting from a point near the left of the saddle point, as depicted in Fig. \ref{fig7}. In this case, the most probable path bifurcates into two black upper and lower paths symmetrically because of the singularity of the Lagrangian manifold.
\begin{figure}
	\centering
	\includegraphics[width=7cm]{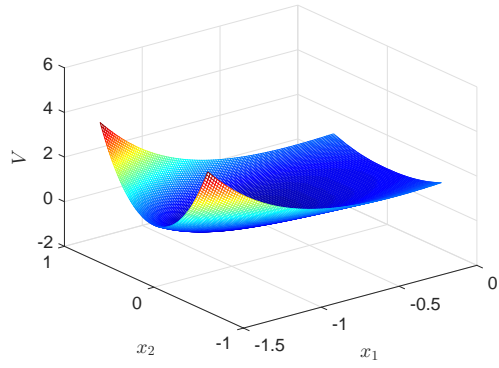}
	\caption{Learned quasipotential for $\gamma=5$.}
	\label{fig6}
\end{figure}

\begin{figure}
	\centering
	\includegraphics[width=7cm]{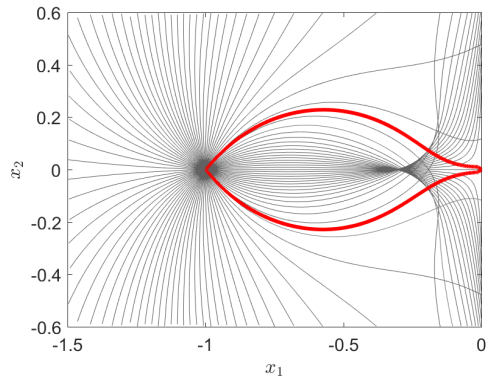}
	\caption{The most probable paths denoted by red line and extreme paths for $\gamma=5$.}
	\label{fig7}
\end{figure}
Now we examine our control strategy again and specify the content of Table \ref{table2}.  We select three noise intensities $\sigma=0.1, 0.075$ and $0.05$, such that the quantities of the uncontrolled mean exit time are $T_0=70.52, 306.06$ and $6244.60$. When the desired mean exit time is $T_d=100$,
we have $T_{c_1}=65.04, 71.19$ and $81.60$ if $c_1=0.0241, 0.1430$ and $0.2620$. We can make far more precise correction to get $T_{c_2}=83.54, 88.76$ and $96.38$ with $c_2=-0.0204, 0.1167$ and $0.2515$. It is worth pointing out that $T_{c_2}$ is closer to $T_d=100$ than $T_{c_1}$.
When the desired mean exit time is $T_d=1000$, we get $T_{c_1}=404.54, 446.47$ and $519.54$ if $c_1=-0.2139, -0.0354$ and $0.1430$.
After a correction, we obtain $T_{c_2}=911.13, 884.27$ and $889.52$ with $c_2=-0.3074, -0.0979$ and $0.1092$. The mean exit time basically is close to the desired value. Similarly, if the accuracy of the correction is not enough, such as in the case of $\sigma=0.075$, $T_d=1000$, multiple corrections can be performed to acquire $T_{c_3}=1007.68$  with $c_3=-0.1075$. As a result, the mean exit time will be getting much nearer to the desired value.

\begin{table}[htbp]
	\centering
	\caption{Controlling mean exit time for $\gamma=5$}
	\begin{tabular}{ccccccc}
		\toprule
		\multicolumn{1}{c}{$\sigma$} & \multicolumn{1}{c}{$T_0$} & \multicolumn{1}{c}{$T_d$} & \multicolumn{1}{c}{$c_1$} & \multicolumn{1}{c}{$T_{c_1}$} & \multicolumn{1}{c}{$c_2$} & \multicolumn{1}{c}{$T_{c_2}$} \\
		\midrule
		0.1 & 70.52  & 100 & 0.0241 & 65.04 & -0.0204 & 83.54 \\
		0.075 & 306.06  & 100 & 0.1430 & 71.19 & 0.1167 & 88.76 \\
		0.05 & 6244.60  & 100 & 0.2620 & 81.60 & 0.2515 & 96.38 \\
		0.1 & 70.52  & 1000 & -0.2139 & 404.54 & -0.3074 & 911.13 \\
		0.075 & 306.06  & 1000 & -0.0354 & 446.47 & -0.0979 & 884.27 \\
		0.05 & 6244.60  & 1000 & 0.1430 & 519.54 & 0.1092 & 889.52 \\
		\bottomrule
	\end{tabular}
	\label{table2}
\end{table}

\section{Conclusion}\label{CF}
In this paper, we developed a feedforward multilayer network with hidden layers for stimulating the mean exit time that is dominant in helping a process escape a bounded domain. Meanwhile, we devised a machine learning framework to compute the quasipotential and most probable paths of metastable transition events. We improved the control strategy to reduce the error between the output vector and the target vector. A typical architecture was shown in Fig. \ref{fig2}. This work generated a huge amount of interest. Pragmatically, we illustrated the effectiveness of the control strategy with an example about the Maier-Stein system.

We can generalize this two-dimensional case by further considering the rather cumbersome/difficult case of general multi-dimensional or time-dependent vector fields. We even suggest a possible treatment in high dimensions by looking for a low-dimensional manifold near which the dynamics are focused. In many variable situations, stochastic neural network can be set up as a type of artificial neural network built by introducing random variations into the network, either by giving the network's artificial neurons stochastic transfer functions, or by giving them stochastic weights. This makes them become useful tools for optimization problems, since the random fluctuations help the network escape from local minima. The expectations are high for future applications in a broad range of disciplines.

\bigskip
\noindent\textbf{Acknowledgements}

The authors are happy to thank Haitao Xu for fruitful discussions on Hamiltonian systems.

\bigskip
\noindent\textbf{Funding}

The authors acknowledge support from the natural science foundation of Jiangsu Province (grant BK20220917).

\bigskip

\noindent\textbf{Data Availability Statement}

Numerical algorithms source code that support the findings of this study are openly
available in GitHub, Ref \cite{code}.

\bigskip

\noindent\textbf{Conflict of Interest}

The authors declare that they have no conflict of interest.








\section*{References}

\end{document}